\newtheorem{theorem}{Theorem}[section]
\newcommand{\mathtext}[1]{\small{\textit{(#1)}}}
\newcommand{\Exp}[2]{\mathop{\mathbb{E}}_{#2}\left[#1\right]}
\newcommand{\vsigma}{\vec{\sigma}}
\DeclareMathOperator{\diag}{diag}
\DeclareMathOperator{\KL}{KL}
\newcommand{\pth}{p_\theta}
\newcommand{\qph}{q_\phi}
\newcommand{\Id}{\mathds{1}}
\newcommand{\dt}{\Delta}
\DeclareMathOperator{\Tr}{Tr}
\newcommand{\Prec}[1]{{\Sigma}_{#1}^{-1}}
\newcommand{\Normal}{\mathcal{N}}
\newcommand{\Ind}[1]{\mathbb{I}\left[#1\right]}
\newcommand{\deriv}[2]{\frac{\textit{\mathbf{d}}#1}{\textit{\mathbf{d}}#2}}
\newcommand{\vone}{\vec{\mathbf{1}}}
\newcommand{\vecx}{\vec{x}}
\newcommand{\vecu}{\vec{u}}
\newcommand{\vecz}{\vec{z}}
\newcommand{\bigCI}{\mathrel{\text{\scalebox{1.07}{$\perp\mkern-10mu\perp$}}}}
\newcommand{\R}{\mathbb{R}}
\newcommand{\meanfxn}{\textit{G}_{\alpha}}
\newcommand{\covfxn}{\textit{S}_{\beta}}
\newcommand{\emisfxn}{\textit{F}_{\kappa}}
\newcommand{\lthph}{\mathcal{L}(x;(\theta, \phi))}
\newcommand{\qIndep}{\textbf{q-INDEP}}
\newcommand{\qLR}{\textbf{q-LR}}
\newcommand{\qRNN}{\textbf{q-RNN}}
\newcommand{\qBRNN}{\textbf{q-BRNN}}
\theoremstyle{plain}
\title{Deep Kalman Filters}
\author{
Rahul G. Krishnan \qquad Uri Shalit \qquad David Sontag\\
Courant Institute of Mathematical Sciences\\
New York University\\
}
\begin{document}
\maketitle

\begin{abstract}
Kalman Filters are one of the most influential
models of time-varying phenomena. 
They admit an intuitive probabilistic interpretation, 
have a simple functional form,
and enjoy widespread adoption in a variety of disciplines. 
Motivated by recent variational methods for learning deep generative models, 
we introduce a unified algorithm to efficiently learn a broad spectrum of Kalman filters. 
Of particular interest is the use of temporal generative models for counterfactual inference. 
We investigate the efficacy of such models for counterfactual inference, and to that end we introduce
the ``Healing MNIST'' dataset where long-term structure, noise and  actions are applied to sequences of digits. 
We show the efficacy of our method for modeling this dataset. We further 
show how our model can be used for counterfactual inference for patients,  
based on electronic health record data of 8,000 patients over 4.5 years.
\end{abstract}

\setlength{\belowcaptionskip}{-5pt}
\section{Introduction}

The compilation of Electronic Health Records (EHRs) 
is now the norm across hospitals in the United States.
A patient record may be viewed
as a sequence of diagnoses, surgeries, laboratory values and drugs prescribed over time. 
The wide availability of these records now allows us to apply machine learning techniques to answer 
medical questions: What is the best course of treatment for a patient? Between two drugs, 
can we determine which will save a patient? Can we find patients who are ``similar'' 
to each other? Our paper introduces new techniques for learning \emph{causal} generative temporal models from noisy high-dimensional data, that we believe is the first step towards addressing these questions.

We seek to model the change of the patient's state over time. We do this by learning a
representation of the patient that (1) evolves over time and (2) is sensitive to the effect of the actions taken by doctors. 
In particular, the approach we adopt is to learn a time-varying, generative model of patients.  

Modelling temporal data is a well studied problem in machine learning. 
Models such as the Hidden Markov Models (HMM), Dynamic Bayesian Networks (DBN), and Recurrent Neural Networks (RNN)
have been proposed to model the probability of sequences.
Here, we consider a widely used probabilistic model: Kalman filters \citep{kalman1960new}. 
Classical Kalman filters are linear dynamical systems, that have enjoyed remarkable success in the last few decades. 
From their use in GPS to weather systems and speech recognition models,
few other generative models of sequential data have enjoyed such widespread adoption across many domains. 

In classical Kalman filters, the latent state evolution as well as the 
emission distribution and action effects are modelled as linear functions perturbed by Gaussian noise. 
However, for real world applications the use of linear transition and emission distribution limits the capacity to model complex phenomena,
and modifications to the functional form of Kalman filters have been proposed. 
For example, the Extended Kalman Filter \citep{jazwinski2007stochastic} and the Unscented Kalman Filter \citep{wan2000unscented} 
are two different methods to learn temporal models with non-linear transition and emission distributions (see also \citet{roweis2000algorithm} and \citet{haykin2004kalman}).
The addition of non-linearities to the model makes learning more difficult. \citet{raiko2009variational} explored ways of using linear approximations 
and non-linear dynamical factor analysis in order to overcome these difficulties. However, 
their methods do not handle long-range temporal interactions and scale quadratically with the latent dimension.

We show that recently developed techniques in variational inference \citep{rezende2014stochastic,kingma2013auto} can be adopted
to learn a broad class of the Kalman filters that exist in the literature using a single algorithm. 
Furthermore, using deep neural networks, we can enhance
Kalman filters with arbitrarily complex transition dynamics and emission distributions. 
We show that we can tractably learn such models
by optimizing a bound on the likelihood of the data.

Kalman filters have been used extensively for optimal control, where the model attempts to capture 
how actions affect the observations, precipitating the task of choosing the best control signal towards a given objective. We use Kalman filters for a different yet closely related task: performing counterfactual inference.
In the medical setting, counterfactual inference attempts to model the effect of an intervention such as a surgery or a drug, on an outcome, e.g. whether the patient survived. The hardness of this problem lies in the fact that
typically, for a single patient, we only see one intervention-outcome pair (the patient cannot have taken and not taken the drug). 
The key point here is that by modelling the sequence of observations such as diagnoses and lab reports, as well as the interventions or actions (in the form of surgeries and drugs administered) across patients, we hope to learn the effect of interventions on  
a patient's future state. 

We evaluate our model in two settings. First we introduce ``Healing MNIST'', a dataset of perturbed, noisy and rotated MNIST digits. We show our model captures both short- and long-range effects of actions performed on these digits. Second, we use EHR data from $8,000$ diabetic and pre-diabetic patients gathered over 4.5 years.
We investigate various kinds of Kalman filters learned using our framework and 
use our model to learn the effect anti-diabetic medication has on patients.

The contributions of this paper are as follows:
\begin{itemize}
	\item  Develop a method for probabilistic generative modelling of sequences of complex observations, perturbed by non-linear actions, using deep neural nets as a building block. 
		We derive a bound on the log-likelihood
		of sequential data and an algorithm to learn a broad class of Kalman filters. 
	\item	We evaluate the efficacy of different recognition distributions for inference and learning.
	\item	We consider this model for use in counterfactual inference 
		with emphasis on the medical setting.
		To the best of our knowledge, the use of continuous state space models 
		has not been considered
		for this goal. 
		On a synthetic setting we empirically validate that our model is able to capture
		patterns within a very noisy setting and model the effect of external actions. On real patient data we show that our model can successfully perform counterfactual inference to show the effect of anti-diabetic drugs on diabetic patients.
\end{itemize}

\section{Background}\label{sec:back}

\textbf{Kalman Filters}
Assume we have a sequence of unobserved variables $z_1, \ldots, z_T \in \R^{s} $. For each unobserved variable $z_t$ we have a corresponding \emph{observation} $x_t \in \R^d$, and a corresponding \emph{action} $u_t \in \R^c$, which is also observed. In the medical domain, the variables $z_t$ might denote the true state of a patient, the observations $x_t$ indicate known diagnoses and lab test results, and the actions $u_t$ correspond to prescribed medications and medical procedures which aim to change the state of the patient. The classical Kalman filter models the observed sequence $x_1, \ldots x_T$ as follows:
$$z_{t} = G_{t} z_{t-1} +  B_{t} u_{t-1} + \epsilon_{t} \, \textit{ (action-transition)\, ,}\qquad x_{t} = F_{t} z_t + \eta_t \, \textit{ (observation)}, \\$$
where $\epsilon_t \sim \mathcal{N}(0,\Sigma_t)$, $\eta_t \sim \mathcal{N}(0, \Gamma_t)$ are zero-mean i.i.d. normal random variables, with covariance matrices which may vary with $t$.
This model assumes that the latent space evolves linearly, transformed at time $t$ by the state-transition matrix $G_{t} \in \R^{s \times s}$. The effect of the control signal $u_t$ is an additive linear transformation of the latent state obtained by adding the vector $B_t u_{t-1}$, where $B_t \in \R^{s \times c}$ is known as the \emph{control-input model}. Finally, the observations are generated linearly from the latent state via the observation matrix $F_t \in \R^{d \times s}$. 

In the following sections, we show how to replace all the linear transformations with 
non-linear transformations parameterized by neural nets. 
The upshot is that the non-linearity makes learning much more challenging, as the posterior 
distribution  $p(z_1, \ldots z_T| x_1, \ldots, x_T, u_1,\ldots, u_T)$ becomes intractable to compute.

\textbf{Stochastic Backpropagation}
In order to overcome the intractability of posterior inference, we make use of recently introduced variational autoencoders \citep{rezende2014stochastic,kingma2013auto}
to optimize a variational lower bound on the model log-likelihood. 
The key technical innovation is the introduction of a \emph{recognition network}, a neural network which approximates the intractable posterior.

Let $p(x,z) = p_0(z) \pth(x|z)$ be a generative model for the set of observations $x$, where $p_0(z)$ is the prior on $z$ and $\pth(x|z)$ is a generative model parameterized by $\theta$. In a model such as the one we posit, the posterior distribution $\pth(z|x)$ is typically intractable. Using the well-known variational principle, we posit an approximate posterior distribution $\qph(z|x)$, also called a \emph{recognition model} - see Figure \ref{fig:vae}.
We then obtain the following lower bound on the marginal likelihood:
\begin{align}\label{eqn:varlowbnd}
 \log \pth(x) 
&=\log \int_{z} \frac{\qph(z|x)}{\qph(z|x)} \pth(x|z) p_0(z) \mathrm{d}z \geq \int_z  \qph(z|x)\log \frac{ \pth(x|z)p_0(z)}{\qph(z|x)} \mathrm{d}z  \nonumber \\
&= \Exp{\log \pth(x|z)}{\qph(z|x)} - \KL(\, \qph(z|x) || p_0(z)\, ) = \lthph,
\end{align}
where the inequality is by Jensen's inequality.  
Variational autoencoders aim to maximize the lower bound using a parametric model $\qph$ conditioned on the input. 
Specifically, \citet{rezende2014stochastic,kingma2013auto} both  suggest using a neural net to parameterize $\qph$, such that $\phi$ are the parameters of the neural net. 
The challenge in the resulting optimization problem is that the lower bound \eqref{eqn:varlowbnd} includes an expectation w.r.t. $\qph$, which implicitly depends on the network parameters $\phi$. 
This difficulty is overcome by using \emph{stochastic backpropagation}: assuming that the latent state is normally distributed $\qph(z | x) \sim \mathcal{N}\left(\mu_\phi(x),\Sigma_\phi(x)\right)$, 
 a simple transformation allows one to obtain Monte Carlo estimates of the gradients of $\Exp{\log \pth(x|z)}{\qph(z|x)} $ with respect to $\phi$. 
 The $\KL$ term in \eqref{eqn:varlowbnd} can be estimated similarly since it is also an expectation. 
 If we assume that the prior $p_0(z)$ is also normally distributed, the $\KL$ and its gradients 
 may be obtained analytically.

\textbf{Counterfactual Estimation}
Counterfactual estimation is the task of inferring the probability of a result given different circumstances than those empirically observed. For example, in the medical setting, one is often interested in questions such as ``What would the patient's blood sugar level be had she taken a different medication?''. Knowing the answers to such questions could lead to better and more efficient healthcare. We are interested in providing better answers to this type of questions, by leveraging the power of large-scale Electronic Health Records. 

\citet{pearl2009causality} framed the problem of counterfactual estimation in the language of graphical models and \emph{do}-calculus. 
If one knows the graphical model of the variables in question, then for some structures estimation of counterfactuals is possible by setting a variable of interest (e.g. medication prescribed) to a given value and performing inference on a derived sub-graph.
In this work, we do not seek to learn the true underlying causal graph structure but rather seek to use \emph{do}-calculus to 
observe the effect of interventions under a causal interpretation of the model we posit. 

\section{Related Work}
The literature on sequential modeling and Kalman filters is vast and  
here we review some of the relevant work on the topic
with particular emphasis on recent work in machine learning. We point the reader 
to \cite{haykin2004kalman} for a summary of some approaches to learn Kalman filters.

\citet{mirowski2009dynamic} model sequences using dynamic factor graphs with an EM-like procedure 
for energy minimization. \cite{srivastava2015unsupervised} consider unsupervised
learning of video representations with LSTMs. They encode a sequence in a fixed length hidden
representation of an LSTM-RNN and reconstruct the subsequent sequence based on this representation.
\cite{gregor2015draw} consider a temporal extension to variational autoencoders where independent
latent variables perturb the hidden state of an RNN across time. 

\cite{langford2009learning} adopt a different approach to learn nonlinear dynamical systems using black-box
classifiers. 
Their method relies on learning three sets of classifiers. The first is trained to construct a compact representation $s_t$ 
to predict the $x_{t+1}$ from $x_t$, the second uses $s_{t-1}$ and $x_{t-1}$ to predict $s_t$. The third
trains classifiers to use $s_{<t}$ to predict $s_t$ and consequently $x_t$. 
In essence, the latent space $s_t$ is constructed using these classifiers. 

\citet{ganSigmoid} similarly learn a generative model
by maximizing a lower bound on the likelihood of sequential
data but do so in a model with discrete random variables. 

\citet{bayer2014learning} create a stochastic variant of Recurrent Neural Networks (RNNs) 
by making the hidden state of the RNN a function of stochastically sampled latent variables
at every time step. 
\citet{chung2015recurrent} model sequences of length $T$
using $T$ variational autoencoders. They use a single RNN
that (1) shares parameters 
in the inference and generative network and 
(2) models the parameters of the prior and approximation to the posterior at time $t\in[1,\ldots T]$ 
as a deterministic function
of the hidden state of the RNN. 
There are a few key differences between their work and ours. 
First, they do not model the effect of external actions
on the data, and second, their choice of model 
ties together inference and sampling from the model whereas we consider decoupled generative and recognition networks. Finally, the time varying ``memory'' of their resulting generative model is both deterministic and 
stochastic whereas ours is entirely stochastic. i.e our model retains the Markov
Property and other conditional independence statements held by Kalman filters.

Learning Kalman filters with 
Multi-Layer Perceptrons was considered by \citet{raiko2009variational}. 
They approximate the posterior using non-linear dynamic factor analysis \citep{valpola2002unsupervised}, which scales quadratically with the latent dimension.
Recently, \citet{watter2015embed}  
use temporal generative models 
for optimal control.
While \citeauthor{watter2015embed} aim to learn a locally linear latent dimension within which to perform optimal control, 
our goal is different: we wish to model the data in order to perform counterfactual inference.
Their training algorithm relies on approximating the bound on the likelihood by training
on consecutive pairs of observations.

In general, control applications deal with domains where the effect of action is instantaneous, unlike in the medical setting. In addition, most control scenarios involve a setting such as controlling a robot arm where the control signal has a major effect on the observation; we contrast this with the medical setting where medication can often have a weak impact on the patient's state, compared with endogenous and environmental factors.

For a general introduction to estimating expected counterfactual effects over a population - see \cite{morgan2014counterfactuals,hofler2005causal,rosenbaum2002observational}. For insightful work on counterfactual inference, in the context of a complex machine-learning and ad-placement system, see \citet{bottou2013counterfactual}.

Recently, \cite{finale2013pomdp} use a partially observable Markov process for modeling diabetic patients over time, finding that the latent state corresponds to relevant lab test levels (specifically, A1c levels).

\section{Model}

Our goal is to fit a generative model to a sequence of observations and actions, motivated by the nature of patient health record data. We assume that the observations come from a latent state which evolves over time. We assume the observations are a noisy, non-linear function of this latent state. Finally, we also assume that we can observe actions, which affect the latent state in a possibly non-linear manner. 

Denote the sequence of observations $\vecx = (x_1, \ldots, x_T)$ and actions $\vecu = (u_1, \ldots, u_{T-1})$, with corresponding latent states $\vecz = (z_1, \ldots, z_T)$.  As previously, we assume that $x_t \in \R^d$, $u_t \in \R^c$, and $z_t \in \R^s$. The generative model for the deep Kalman filter is then given by:

\begin{equation}
	\begin{split}
	\label{eqn:gen_model}
	& z_1 \sim \mathcal{N}(\mu_0;\Sigma_0) \\
	& z_t \sim \mathcal{N} (\meanfxn(z_{t-1},u_{t-1},\dt_t),\covfxn(z_{t-1},u_{t-1},\dt_t))
\quad \\
	& x_t \sim \Pi(\emisfxn(z_t)).
	\end{split}
\end{equation}

That is, we assume that the distribution of the latent states is Normal, with a mean and covariance which are nonlinear functions of the previous latent state, the previous actions, and the time different $\dt_t$ between time $t-1$ and time $t$ \footnote{More precisely, this is a \emph{semi-Markov} model, and we assume that the time intervals are modelled separately. In our experiments we consider homogeneous time intervals.}.
The observations $x_t$ are distributed according to a distribution $\Pi$ (e.g. a Bernoulli distribution if the data is binary) whose parameters are a function of the corresponding latent state $z_t$.
Specifically, the functions $\meanfxn,\covfxn,\emisfxn$ are assumed to be  parameterized by deep neural networks. We set $\mu_0 = 0$, $\Sigma_0 = I_d$, and therefore we have that $\theta=\{\alpha,\beta,\kappa\}$ are the parameters of the generative model. We use a diagonal covariance matrix $\covfxn(\cdot)$, and employ a log-parameterization, thus ensuring that the covariance matrix is positive-definite. The model is presented in Figure \ref{fig:dkf}, along with the recognition model $\qph$ which we outline in Section \ref{sec:learnmodel}.

The key point here is that Eq. \eqref{eqn:gen_model} subsumes a large family of linear and non-linear latent space models. By restricting the functional 
forms of $\meanfxn,\covfxn,\emisfxn$, we can train different kinds of Kalman filters
within the framework we propose. 
For example, by setting $\meanfxn(z_{t-1},u_{t-1}) = G_t z_{t-1}+B_t u_{t-1},\covfxn = \Sigma_t,\emisfxn=F_t z_{t}$ where $G_t,B_t,\Sigma_t,F_t$ are 
matrices, we obtain classical Kalman filters. In the past, modifications to the Kalman filter typically introduced a new learning algorithm
and heuristics to approximate the posterior more accurately. In contrast, within the framework we propose any parametric differentiable function can be substituted
in for one of $\meanfxn,\covfxn,\emisfxn$. Learning any such model can be done using backpropagation as will be detailed in the next section. 

\begin{figure}[h!]
    \begin{subfigure}[b]{0.3\textwidth}
    \centering
	\begin{tikzpicture}[scale=1, transform shape]
		\node[obs] (x1) {$\mathbf{x}$};
		\node[latent, above=of x1] (z1) {$\mathbf{z}$};
		\node[const, left=of z1] (phi1) {$\phi$};
		\node[const, right=of z1] (theta1) {$\theta$};
		\edge [dashed] {phi1} {z1};
		\edge {theta1} {z1};
		\edge {theta1} {x1};
		\draw (x1) edge[out=135,in=225,->,dashed] (z1);
		\edge {z1} {x1};
		\plate [xscale=1.5] {} {(x1)(z1)} {} ;
	\end{tikzpicture}
	\caption{Variational Autoencoder}
	\label{fig:vae}
    \end{subfigure}
    \begin{subfigure}[b]{0.65\textwidth}
    \centering
    	\includegraphics[width=0.6\textwidth]{./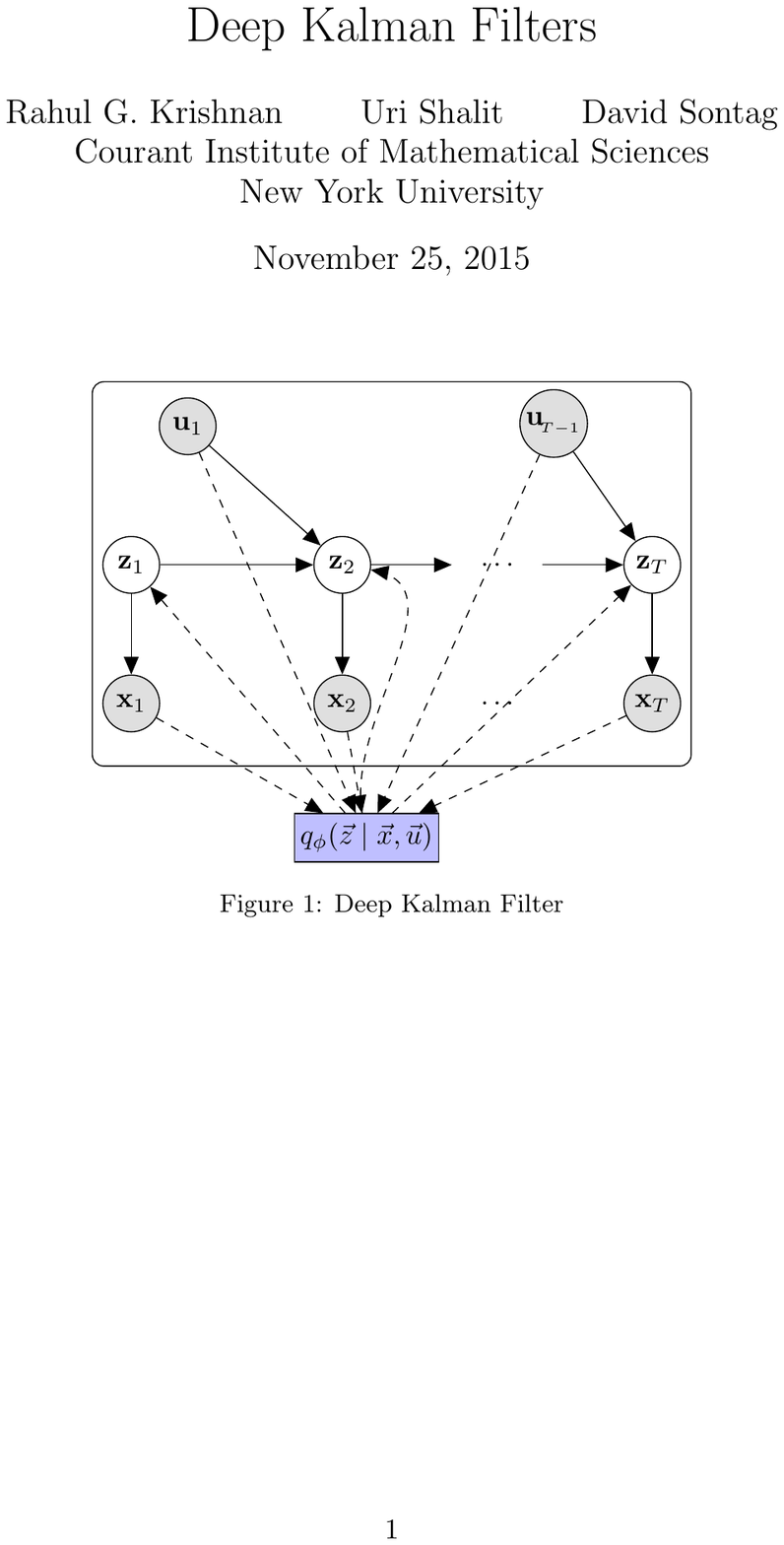}
	\caption{Deep Kalman Filter}
	\label{fig:dkf}
    \end{subfigure}
    \caption{\small{(a): Learning static generative models. Solid lines denote the generative model $p_0(z) \pth(x|z)$, dashed lines denote the variational approximation $q_{\phi}(z|x)$ to the intractable posterior $p(z|x)$. The variational parameters $\phi$ are learned jointly with the generative model parameters $\theta$.
    (b): Learning in a Deep Kalman Filter. A parametric approximation to $\pth(\vecz|\vecx)$, denoted $\qph(\vecz|\vecx,\vecu)$, is used to perform inference during learning.}}
    \label{fig:plate_models}
\end{figure}

\section{Learning using Stochastic Backpropagation}\label{sec:learnmodel}

\subsection{Maximizing a Lower Bound}

We aim to fit the generative model parameters $\theta$ which maximize the conditional likelihood of the data given the external actions, i.e we desire 
$\max_{\theta} \log \pth(x_1\ldots,x_T|u_1\ldots u_{T-1})$.
Using the variational principle, we apply the lower bound on the log-likelihood of the observations $\vecx$ derived in Eq. \eqref{eqn:varlowbnd}. 
We consider the extension of the Eq. \eqref{eqn:varlowbnd} to the temporal setting where we use the following factorization of the prior:

\begin{equation}
	\label{eqn:q_fact}
	\qph(\vecz|\vecx,\vecu) = \prod_{t=1}^T q(z_t|z_{t-1},x_t,\ldots,x_T,\vecu)
\end{equation}
We motivate this structured factorization of $\qph$ in Section \ref{sec:opt_q}. 
We condition the variational approximation not just on the inputs $\vecx$ but also on the actions $\vecu$.

Our goal is to derive a lower bound to the conditional log-likelihood in a form that will factorize easily and make learning more amenable. 
The lower bound in Eq. \eqref{eqn:varlowbnd} has an analytic form of the $\KL$ term only for the simplest of transition models $\meanfxn,\covfxn$. 
Resorting to sampling for estimating the gradient of the KL term results in very high variance. 
Below we show another way to factorize the KL term which results in more stable gradients, by using the Markov property of our model.

We have for the conditional log-likelihood (see Supplemental Section A for a more detailed derivation):
\begin{align*}
&\log \pth(\vecx|\vecu) %
\geq
\int_{\vecz} \qph(\vecz|\vecx,\vecu) \log \frac{p_0(\vecz|\vecu)\pth(\vecx|\vecz,\vecu)}{\qph(\vecz|\vecx,\vecu)} d\vecz\\ 
	&= \Exp{\log \pth(\vecx|\vecz,\vecu)}{\qph(\vecz|\vecx,\vecu)} - \KL(\qph(\vecz|\vecx,\vecu)||p_0(\vecz|\vecu)) \\
	&\stackrel{\mathtext{using $x_{t}\bigCI x_{\neg t}|\vecz$}}{=} \sum_{t=1}^T\Exp{\log \pth(x_t|z_t,u_{t-1})}{z_t \sim \qph(z_t|\vecx,\vecu)} - \KL(\qph(\vecz|\vecx,\vecu)||p_0(\vecz|\vecu))  = \lthph.
\end{align*}
The KL divergence can be factorized as:
\begin{dmath}
	\label{eqn:KLdiv_factorized}
	\KL(\qph(\vecz|\vecx,\vecu)||p_0(\vecz)) \newline\\
	= \int_{z_1}\ldots\int_{z_T} \qph(z_1|\vecx,\vecu)\ldots \qph(z_T|z_{T-1},\vecx,\vecu) 
	\log \frac{p_0(z_1,\cdots,z_T)}{\qph(z_1|\vecx,\vecu)\ldots \qph(z_T|z_{T-1},\vecx,\vecu)} \mathrm{d}\vecz\\
	\stackrel{\mathtext{Factorization of $p(\vecz)$}}{=} \KL(\qph(z_1|\vecx,\vecu)||p_0(z_1))\newline\\ 
	+ \sum_{t=2}^{T} \Exp{\KL(\qph(z_t|z_{t-1},\vecx,\vecu)||p_0(z_t|z_{t-1},u_{t-1}))}{z_{t-1} \sim \qph(z_{t-1}|\vecx,\vecu)}\nonumber.
\end{dmath}

This yields: 
\begin{align}
\label{eqn:bound_likelihood}
&	\log \pth(\vecx|\vecu) \geq  \lthph = \nonumber \\
& \sum_{t=1}^T\Exp{\log \pth(x_t|z_t)}{\qph(z_t|\vecx,\vecu)} - \KL(\qph(z_1|\vecx,\vecu)||p_0(z_1))\nonumber\\ 
&-  \sum_{t=2}^{T} \Exp{\KL(\qph(z_t|z_{t-1},\vecx,\vecu)||p_0(z_t|z_{t-1},u_{t-1}))}{\qph(z_{t-1}|\vecx,\vecu)}.
\end{align}
Equation \eqref{eqn:bound_likelihood} is differentiable
in the parameters of the model ($\theta,\phi$), and we can apply backpropagation for updating $\theta$, and stochastic backpropagation 
for estimating the gradient w.r.t. $\phi$ of the expectation terms w.r.t. $\qph(z_t)$.
Algorithm \ref{alg1} depicts the learning algorithm. It can be viewed as a four stage process. The first stage is inference of $\vecz$ from an input $\vecx$, $\vecu$ by the recognition network $\qph$.
The second stage is having the generative model $\pth$ reconstruct the input using the current estimates of the posterior. The third stage involves estimating gradients of the likelihood with respect to $\theta$ and $\phi$,
and the fourth stage involves updating parameters of the model. Gradients are typically averaged across stochastically sampled
mini-batches of the training set. 

\begin{algorithm}[t]
\caption{Learning Deep Kalman Filters}
\begin{algorithmic} \label{alg1}
\WHILE{$notConverged()$}
\STATE $\vecx \leftarrow sampleMiniBatch()$
\STATE Perform inference and estimate likelihood:
\STATE 1. $\hat{z}\sim \qph(\vecz|\vecx,\vecu)$
\STATE 2. $\hat{x}\sim \pth(\vecx|\hat{z})$
\STATE 3. Compute $\nabla_{\theta}\mathcal{L}$ and $\nabla_{\phi}\mathcal{L}$ (Differentiating \eqref{eqn:bound_likelihood}) 
\STATE 4. Update $\theta,\phi$ using ADAM 
\ENDWHILE 
\end{algorithmic}
\end{algorithm}

\subsection{On the choice of the Optimal Variational Model\label{sec:opt_q}}
For time varying data, there exist many choices for the recognition network. 
We consider four variational models of increasing complexity. 
Each model conditions on a different subset of the observations through the use of Multi-Layer Perceptrons (MLP) and Recurrent Neural Nets (RNN) (As implemented in \cite{zaremba2014learning}):
\begin{itemize}
	\item \qIndep: $q(z_t|x_t,u_t)$ parameterized by an MLP
	\item \qLR:  $q(z_t|x_{t-1},x_t,x_{t+1},u_{t-1},u_t,u_{t+1})$ parameterized by an MLP
	\item \qRNN: $q(z_{t}|x_1,\ldots,x_t, u_1, \ldots u_t)$ parameterized by a RNN
	\item \qBRNN: $q(z_{t} | x_1,\ldots,x_T,u_1, \ldots, u_T)$ parameterized by a bi-directional RNN 
\end{itemize} 
In the experimental section we compare the performance of these four models on a challenging sequence reconstruction task.

An interesting question is whether the Markov properties of the model can enable better design of approximations to the posterior. 

\begin{theorem}
	\label{thm:p_fact}
	For the graphical model depicted in Figure \ref{fig:dkf}, the posterior factorizes as: 
	\begin{equation*} p(\vecz|\vecx,\vecu) = p(z_1|\vecx,\vecu)\prod_{t=2}^T p(z_t|z_{t-1},x_t,\ldots,x_T,u_{t-1},\ldots,u_{T-1})\end{equation*}
\end{theorem}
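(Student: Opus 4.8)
The plan is to read the claimed factorization off the DAG of Figure~\ref{fig:dkf} using the chain rule together with a handful of conditional-independence (d-separation) statements. First I would expand by the chain rule with the natural ordering $z_1,z_2,\ldots,z_T$:
\[
p(\vecz\mid\vecx,\vecu) \;=\; p(z_1\mid\vecx,\vecu)\prod_{t=2}^{T} p(z_t\mid z_1,\ldots,z_{t-1},\vecx,\vecu).
\]
All the work is then in showing that for each $t\ge 2$ the conditioning set $\{z_1,\ldots,z_{t-1},x_1,\ldots,x_T,u_1,\ldots,u_{T-1}\}$ may be pruned to $\{z_{t-1},x_t,\ldots,x_T,u_{t-1},\ldots,u_{T-1}\}$, i.e. that
\[
z_t \;\bigCI\; \{z_1,\ldots,z_{t-2},\,x_1,\ldots,x_{t-1},\,u_1,\ldots,u_{t-2}\}\ \Big|\ \{z_{t-1},\,x_t,\ldots,x_T,\,u_{t-1},\ldots,u_{T-1}\}.
\]
Substituting this into the expansion (and leaving the first factor $p(z_1\mid\vecx,\vecu)$ untouched) gives exactly the stated product.

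Second, I would prove the displayed independence by a Bayes-ball / d-separation argument on the generative graph of \eqref{eqn:gen_model}, whose only edges are $z_{t-1}\to z_t$, $u_{t-1}\to z_t$, and $z_t\to x_t$. Three structural observations drive it: (i) each action node $u_s$ is a source with the single child $z_{s+1}$, so no active trail can pass \emph{through} a $u$-node, only terminate at one; (ii) the only colliders in the graph are the latent nodes (each $z_s$ has parents $z_{s-1}$ and $u_{s-1}$), and every $x$-node is childless; hence (iii) the only way to travel from $z_t$ to a variable with time index $<t$ is through the edge $z_{t-1}\to z_t$, and on every such trail $z_{t-1}$ occurs as a chain or fork node — never as a collider — so conditioning on $z_{t-1}$ blocks it. It then remains to check that conditioning on the \emph{future} observations $x_{t+1},\ldots,x_T$ does not re-open a route into the past: these are descendants only of the colliders $z_{t+1},\ldots,z_T$, so they activate those colliders, but from an activated $z_s$ with $s>t$ any continuation either moves further forward in time or dead-ends at the leaf $u_{s-1}$, so no trail reaching $\{z_{<t-1},x_{<t},u_{<t-1}\}$ is created. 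This establishes the independence.

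The step I expect to need the most care is exactly this collider bookkeeping in (ii)--(iii): one must be explicit that conditioning on the downstream observations $x_{\ge t}$ activates the colliders $z_{\ge t}$ yet still cannot open a back-door to the past because the action nodes are leaves. As a cross-check — and an alternative that avoids d-separation altogether — one can instead work from the explicit joint $p(\vecz,\vecx\mid\vecu)=\prod_{t}p(z_t\mid z_{t-1},u_{t-1})\,p(x_t\mid z_t)$ implied by \eqref{eqn:gen_model} (with $p(z_1\mid z_0,u_0):=p_0(z_1)$), introduce the backward messages $\beta_t(z_t)=p(x_{t+1},\ldots,x_T\mid z_t,u_t,\ldots,u_{T-1})$, verify that $p(z_t\mid z_{t-1},x_t,\ldots,x_T,u_{t-1},\ldots,u_{T-1})\propto p(z_t\mid z_{t-1},u_{t-1})\,p(x_t\mid z_t)\,\beta_t(z_t)$ with normalizer $\beta_{t-1}(z_{t-1})$, and observe that the product of these factors over $t$ telescopes (the $\beta$'s cancel) down to $\prod_t p(z_t\mid z_{t-1},u_{t-1})\,p(x_t\mid z_t)\propto p(\vecz\mid\vecx,\vecu)$; matching the two computations confirms the factorization.
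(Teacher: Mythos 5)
Your proposal is correct and follows essentially the same route as the paper: a chain-rule expansion of $p(\vecz|\vecx,\vecu)$ over $z_1,\ldots,z_T$ followed by pruning each conditioning set using the Markov structure of the generative model, which is exactly the paper's argument, except that the paper simply asserts the conditional independencies ($z_t \bigCI x_1,\ldots,x_{t-1} \mid z_{t-1}$ and $z_t \bigCI u_1,\ldots,u_{t-2} \mid z_{t-1}$) while you verify the (slightly more carefully stated) independence by explicit d-separation, with the backward-message computation as an optional cross-check.
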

\begin{proof}
	We use the independence statements implied by the generative model in Figure \ref{fig:dkf}
	to note that $p(\vecz|\vecx,\vecu)$, the true posterior, factorizes as:
	\begin{equation*} p(\vecz|\vecx,\vecu) = p(z_1|\vecx,\vecu)\prod_{t=2}^T p(z_t|z_{t-1},\vecx,\vecu) \end{equation*}
	Now, we notice that $z_t\bigCI x_1,\ldots,x_{t-1}|z_{t-1}$ and $z_t\bigCI u_1\ldots,u_{t-2}|z_{t-1}$, yielding:
	$$p(\vecz|\vecx,\vecu) = p(z_1|\vecx,\vecu)\prod_{t=2}^T p(z_t|z_{t-1},x_t,\ldots,x_T,u_{t-1},\ldots,u_{T-1})$$
\end{proof}

The significance of Theorem \ref{thm:p_fact} is twofold. First, it tells us how we can 
use the Markov structure of our graphical model to simplify the posterior
that any $\qph(\vecz)$ must approximate. Second, it yields insight on how to design
approximations to the true posterior. Indeed this motivated the factorization of $\qph$ in Eq. \ref{eqn:q_fact}.
Furthermore, instead of using a bi-directional RNN
to approximate $p(z_t|\vecx,\vecu)$ by summarizing both the past and the future ($x_1,\ldots,x_T$), 
one can approximate
the same posterior distribution using a single RNN that summarizes the future ($x_t,\ldots,x_T$) as 
long as one also conditions on the previous latent
state ($z_{t-1}$). Here, $z_{t-1}$ serves as a summary of $x_1,\ldots,x_{t-1}$. 

For the stochastic backpropagation model, the variational lower bound is tight if and only if $\KL(\qph(z|x)||\pth(z|x))=0$. In that case, we have that $\lthph = \log \pth(x)$, and the optimization objective \eqref{eqn:bound_likelihood} reduces to a maximum likelihood objective.
In the stochastic backpropagation literature, the variational distribution $\qph(z|x)$ is usually Gaussian and therefore cannot be expected to be equal to $\pth(z|x)$. An interesting question is whether using the idea of the universality of normalizing flows  \citep{tabak2010density,rezende2015variational} one can transform $\qph(z|x)$ to be equal (or arbitrarily close) to $\pth(z|x)$ and thus attain equality in the lower bound. Such a result leads to a consistency result for the learned model, stemming from the consistency of maximum likelihood.

\subsection{Counterfactual Inference}
Having learned a generative temporal model, we can use the model to perform counterfactual inference. 
Formally, consider a scenario where we are interested in evaluating the effect of an intervention
at time $t$. 
We can perform inference on the set of observations: $\{x_1,\ldots,x_t,u_1,\ldots,u_{t-1}\}$
using the learned $\qph$. This gives us an estimate $z_t$. At this point, we can apply $u_t$ (the action intended for the patient)
as well as $\tilde{u}_t$ (the action to be contrasted against). We can forward sample from this latent state in order to contrast the expected effect of different actions.

\section{Experimental Section}
We implement and train models in Torch \citep{collobert2011torch7} using ADAM \citep{kingma2014adam} with a learning rate of $0.001$ to perform gradient ascent. 
Our code is implemented to parameterize $\log \covfxn$ during learning.
We use a two-layer Long-Short Term Memory Recurrent Neural Net (LSTM-RNN, \cite{zaremba2014learning}) for sequential variational models. 
We regularize models during
training (1) using dropout \citep{srivastava2014dropout} with a noise of $0.1$ to the input
of the recognition model
(2) through the addition of small random uniform noise (on the order of a tenth of the maximal value) to the actions. 

\textbf{Comparing recognition models}
We experiment with four choices of variational models of increasing complexity: $\qIndep$ where $q(z_t|x_t)$ is parameterized by an MLP, $\qLR$ where $q(z_t|x_{t-1},x_t,x_{t+1})$ is parameterized by an MLP, $\qRNN$ where $q(z_{t}|x_1,\ldots,x_t)$ is parameterized by an RNN, and $\qBRNN$ where $q(z_{t} | x_1,\ldots,x_T)$ is parameterized by a bi-directional RNN.

\subsection{Healing MNIST}
Healthcare data exhibits diverse structural properties. Surgeries and drugs
vary in their effect as a function of patient age, gender, ethnicity and comorbidities. 
Laboratory measurements are often noisy, and diagnoses
may be tentative, redundant or delayed. In insurance health claims data, the situation 
is further complicated by arcane, institutional specific
practices that determine how decisions made by healthcare professions are 
repurposed into codes used for reimbursements. 

To mimic learning under such harsh conditions, we consider a synthetic dataset derived from the MNIST Handwritten Digits \citep{lecun2010mnist}.
We select several digits and create a synthetic dataset where rotations are performed to the digits. The rotations are encoded as the actions ($\vecu$)
and the rotated images as the observations ($\vecx$). This realizes a sequence of rotated images. To each such generated training sequence, 
exactly one sequence of three consecutive squares is superimposed with the top-left 
corner of the images in a random starting location, and add up to 20\% bit-flip noise.  We consider two experiments: {\bf{Small Healing MNIST}}, using a single example of the digit $1$ and digit $5$, and {\bf{Large Healing MNIST}} where 
$100$ different digits (one hundred 5's and one hundred 1's) are used. The training set comprises approximately $40000$ sequences of length five for  {\bf{Small Healing MNIST}}, and $140000$ sequences of length five for {\bf{Large Healing MNIST}}.
The large dataset represents the temporal evolution of two distinct subpopulations 
of patients (of size $100$ each). The squares within the sequences  are intended to be analogous to seasonal flu or other ailments
that a patient could exhibit that are independent of the actions and which last several timesteps.

The challenges present within this dataset are numerous. 
(1) Image data is intrinsically high dimensional and much recent work has focused on learning
patterns from it. It represents a setting where the posterior is complex and often requires 
highly non-linear models in order to approximate it. 
The additions of rotated images to the training data adds more complexity to the task.
(2) In order to learn through random noise that is this high, 
one needs to have a model of sequences capable of performing ``filtering''. 
Models that rely on predicting the next image based on the previous
one \citep{goroshin2015learning,memisevic2013learning} may not suffice to learn the structure of digits in the presence of large noise and rotation.  
Furthermore, long-range patterns - e.g. the three consecutive blocks in the upper-left corner - that exist in the data are beyond the scope of such models.

We learn models using the four recognition models described in Section \ref{sec:learnmodel}.
Figure \ref{fig:healingMNIST_reconstructions} shows examples of training sequences (marked {\bf{TS}}) from {\bf{Large Healing MNIST}} provided to the model, 
and their corresponding reconstructions (marked {\bf{R}}). 
The reconstructions are performed by feeding the input sequence into the learned recognition network, 
and then sampling from the resulting posterior distribution. Recall that the model posits $\vecx$ drawn 
from an independent Bernoulli distribution whose mean parameters (denoted mean probabilities) we visualize. We discuss results in more detail below.

\begin{figure}
    \centering
    \begin{subfigure}[b]{0.33\textwidth}
	\includegraphics[width=0.9\textwidth]{./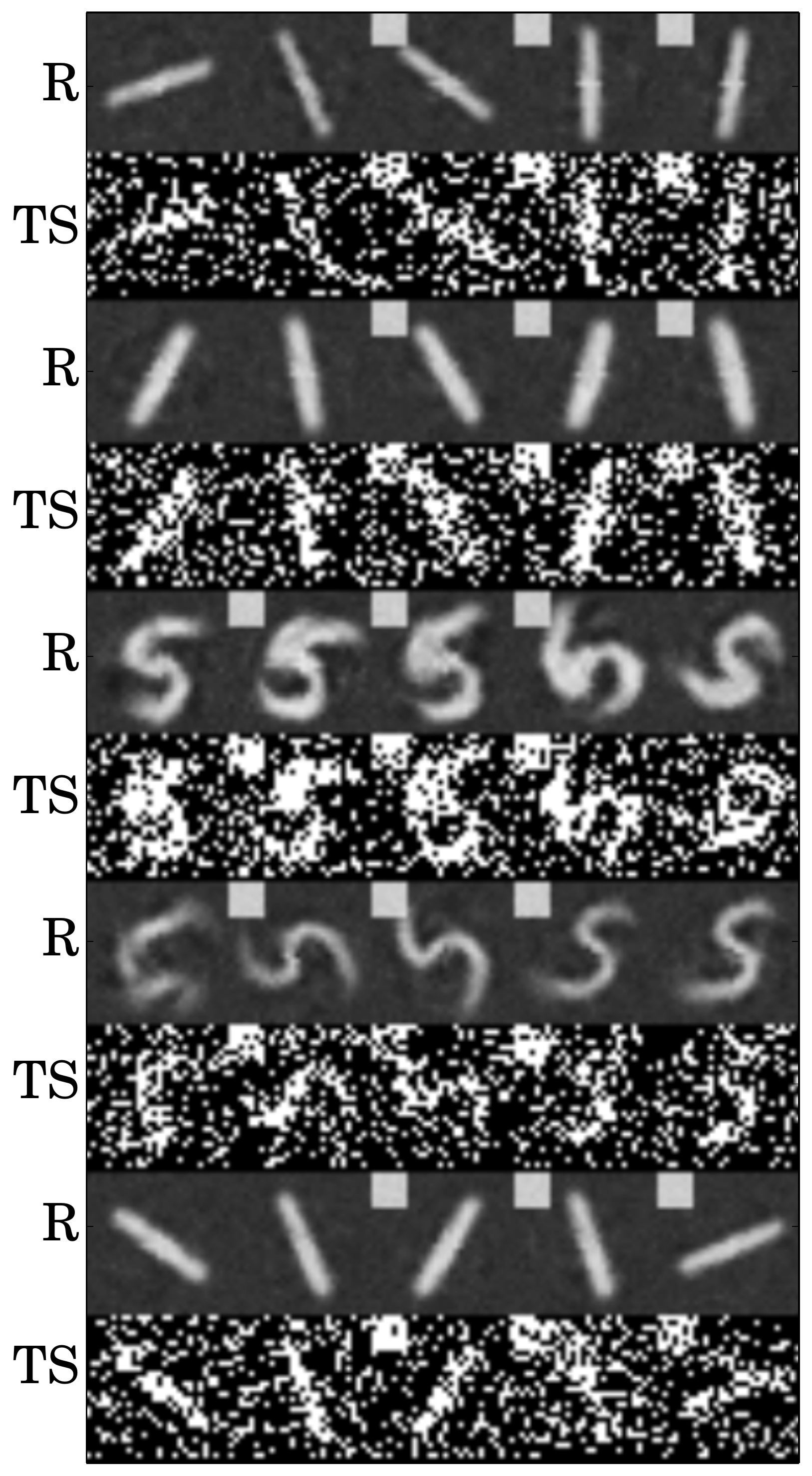}
	\caption{\small{Reconstruction during training}}
	\label{fig:healingMNIST_reconstructions}
    \end{subfigure}
    \begin{subfigure}[b]{0.3\textwidth}
	\includegraphics[width=0.9\textwidth]{./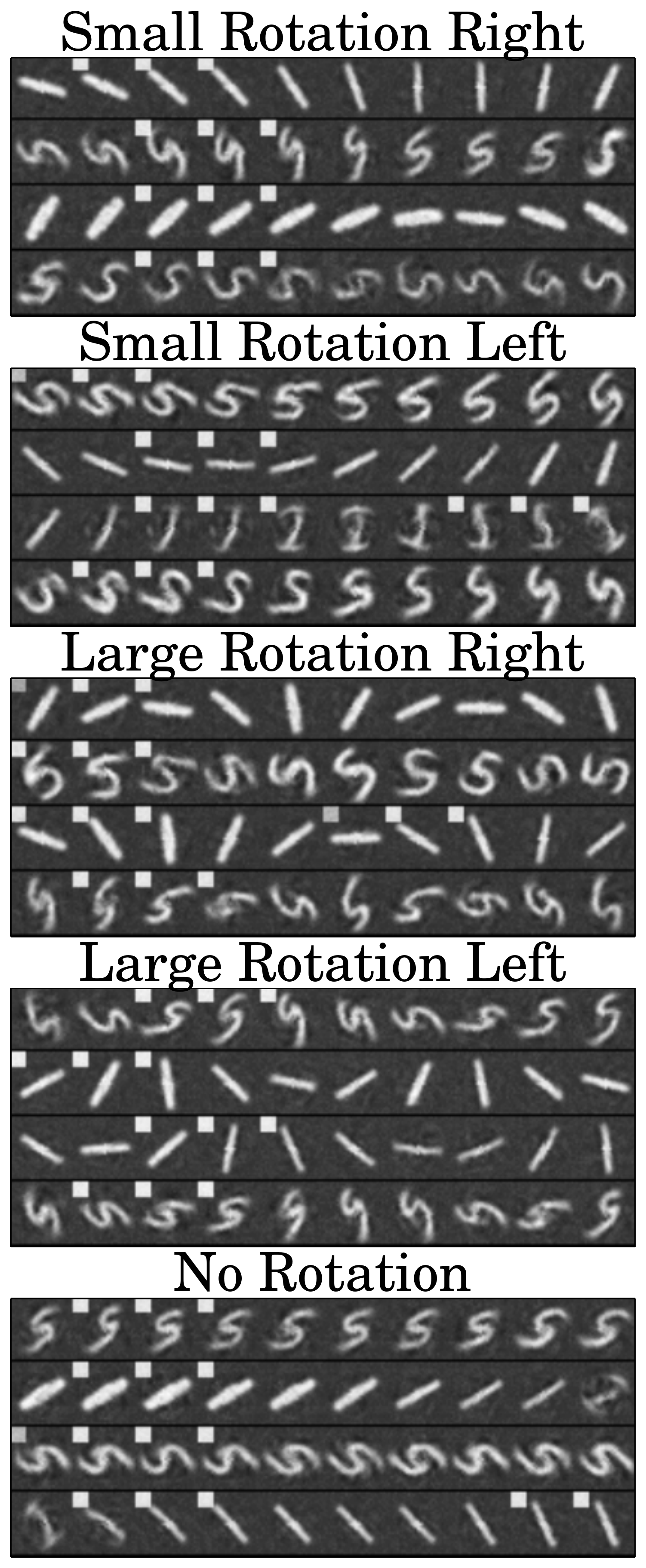}
	\caption{\small{Samples: Different rotations}}
	\label{fig:healingMNIST_samples}
    \end{subfigure}
    \begin{subfigure}[b]{0.33\textwidth}
	\includegraphics[width=0.9\textwidth]{./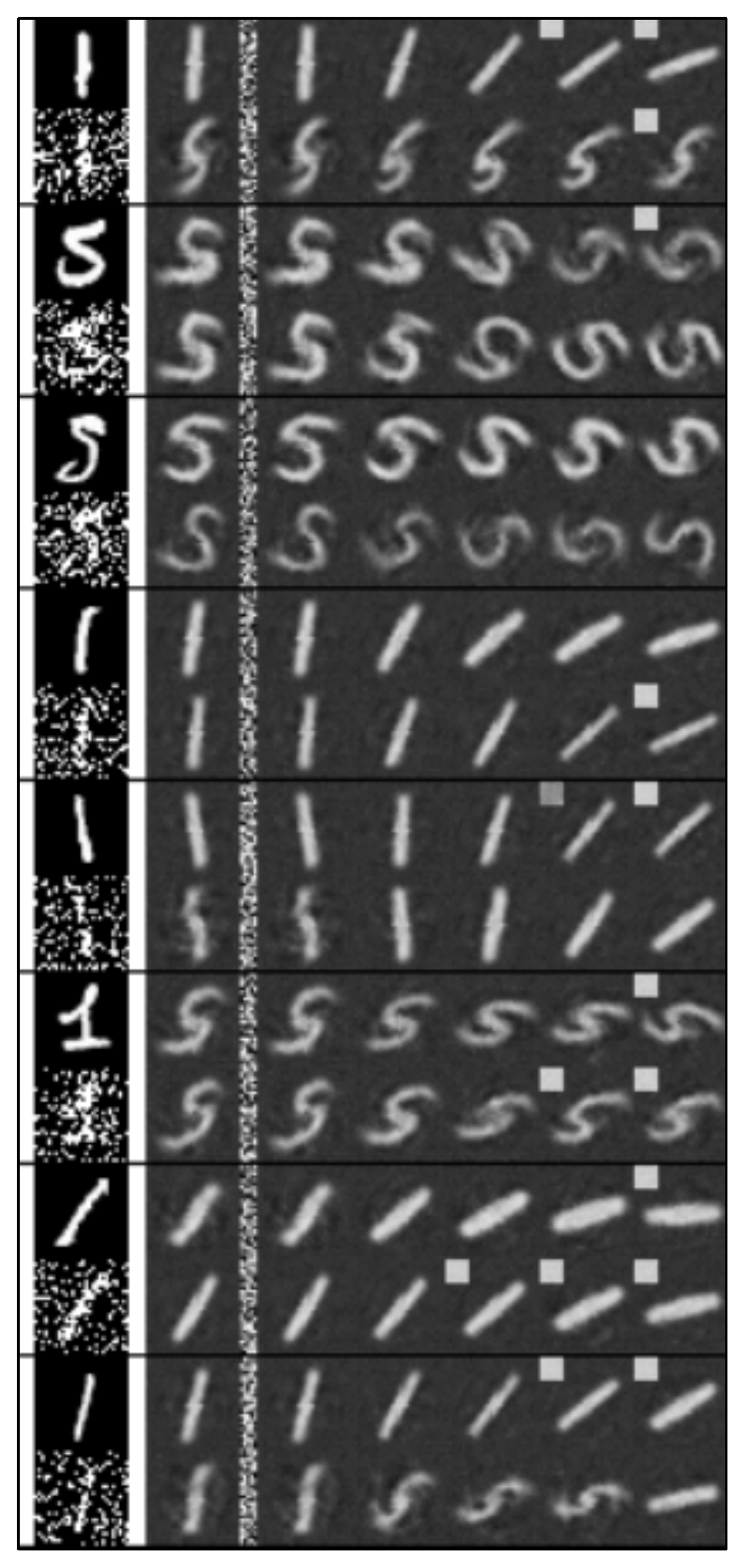}%
	\caption{\small{Inference on unseen digits}}
	\label{fig:healingMNIST_counterfactuals}
    \end{subfigure}
    \caption{\small{{\bf{Large Healing MNIST}}. (a) Pairs of Training Sequences (TS) and Mean Probabilities of Reconstructions (R) shown above. (b) Mean probabilities sampled from the model under different, constant rotations. (c) Counterfactual Reasoning. We reconstruct variants of the digits $5,1$  \emph{not} present in the training set, with (bottom) and without (top) bit-flip noise. We infer a sequence of 1 timestep and display the reconstructions from the posterior. We then keep the latent state and perform forward sampling and reconstruction from the generative model under a constant
	right rotation.}}
	\label{fig:healingMNIST}
\end{figure}
\begin{figure}
    \centering
    \begin{subfigure}[b]{0.55\textwidth}
	\includegraphics[width=0.85\textwidth]{./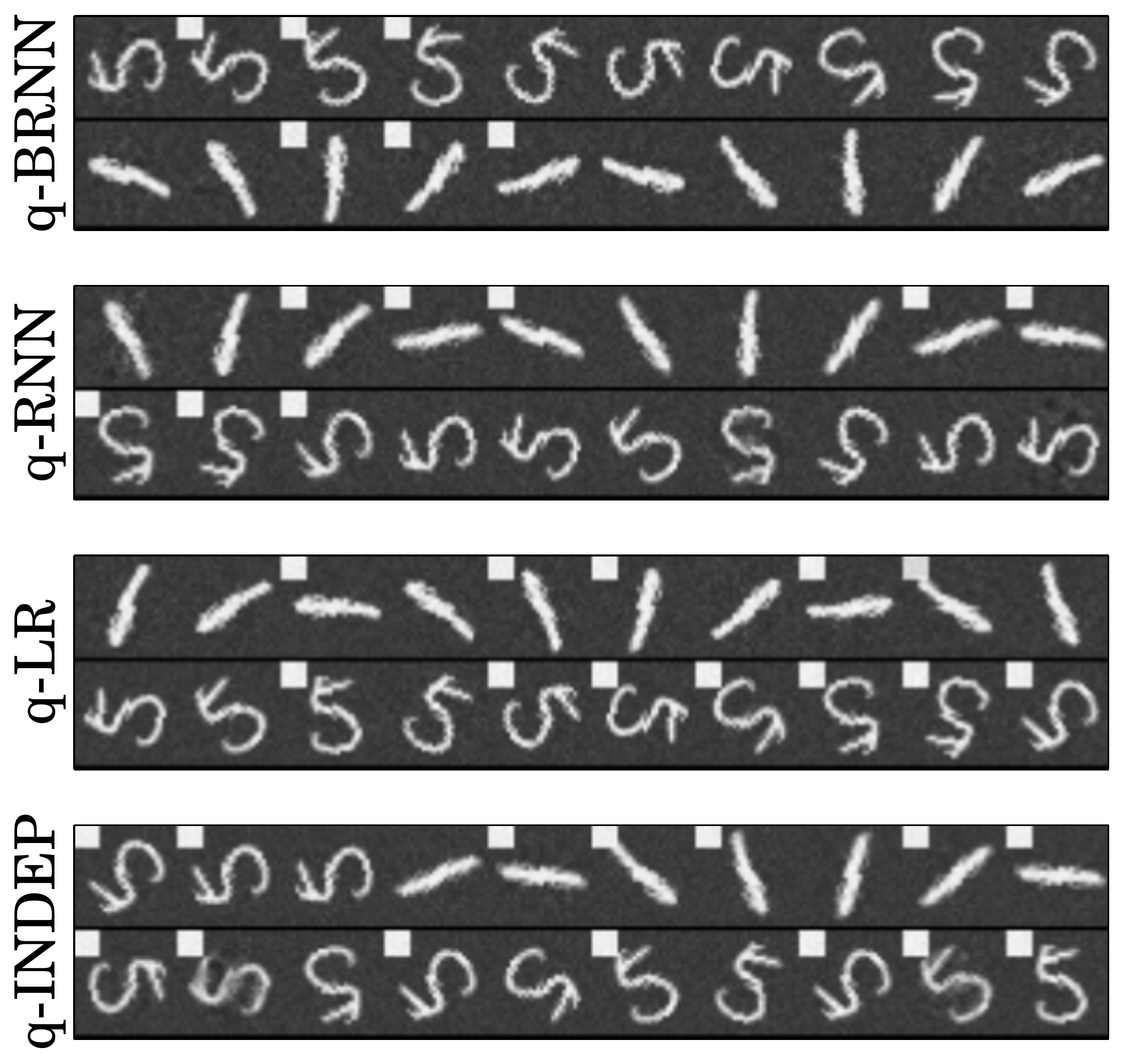}
	\caption{\small{Samples from models trained with different $\qph$}}
	\label{fig:qModel_samples}
    \end{subfigure}
    \begin{subfigure}[b]{0.44\textwidth}
	\includegraphics[width=0.85\textwidth]{./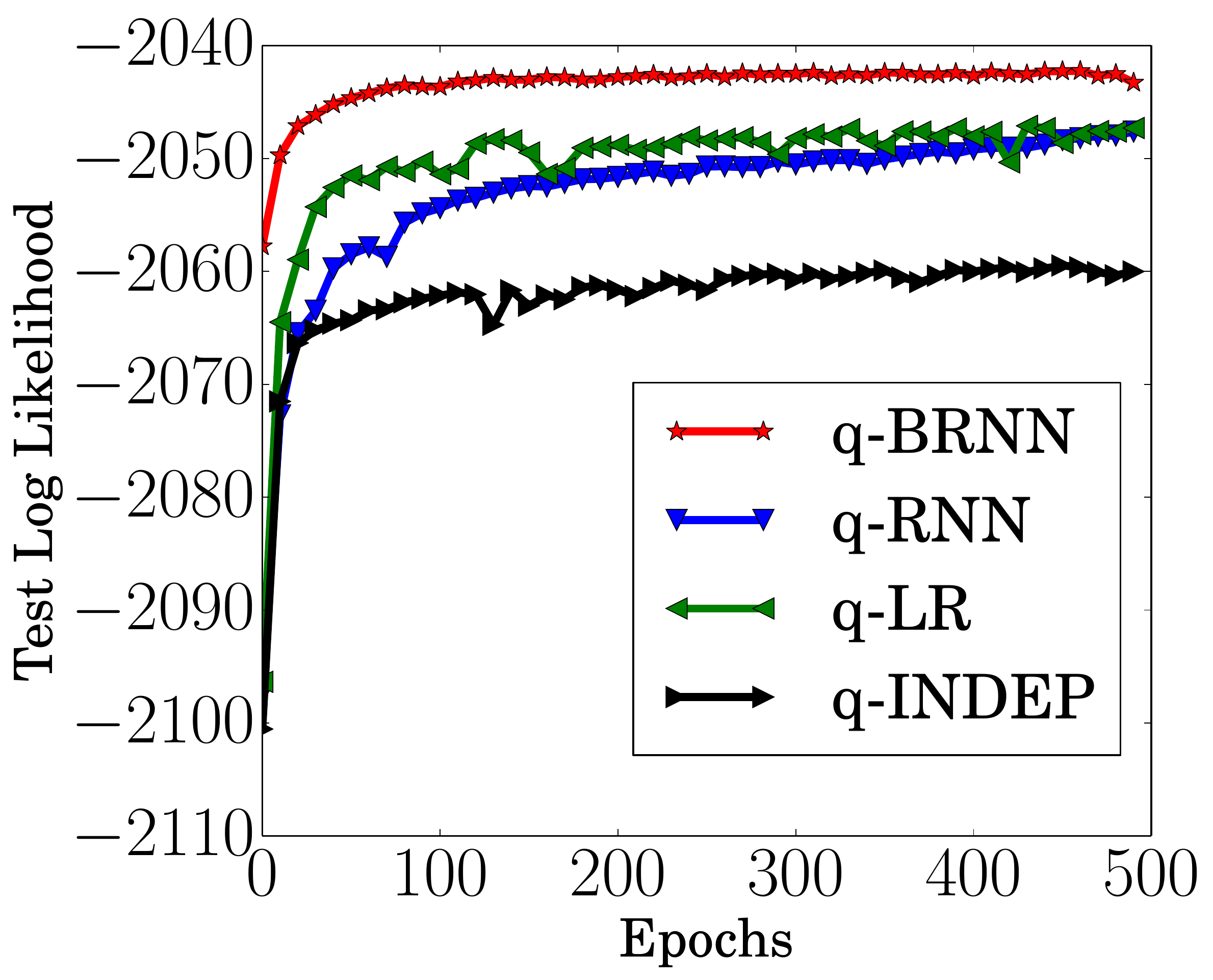}
	\caption{\small{Test Log-Likelihood for models trained with different $\qph$}}
	\label{fig:qModel_ll}
    \end{subfigure}
    
    \caption{\small{
	    {\bf{Small Healing MNIST}}: (a) Mean probabilities sampled under different variational models with a constant, large rotation applied to the right. (b) Test log-likelihood under different recognition models.
    }}
    \label{fig:qModel}
\end{figure}

\subsection*{\textbf{Small Healing MNIST}: Comparing Recognition Models}

We evaluated the impact of different variational models on learning by examining test log-likelihood and by visualizing the samples generated by the models. 
Since $\qRNN$ and $\qBRNN$ have more parameters by virtue of their internal structure,  
we added layers to the $\qIndep$ network (6 layers) and $\qLR$ network (4 layers) until training on the dataset was infeasible - i.e. did not make any gains after more than 100 epochs of training\footnote{In particular, we found that
adding layers to the variational model helped but only up to a certain point. Beyond that, learning the model was infeasible.}. Figure \ref{fig:qModel_ll} depicts the log-likelihood under the test set (we estimate test log-likelihood using importance sampling based on the recognition network - see Supplemental Section A). Figure \ref{fig:qModel_samples} depicts pairs of sequence samples obtained under each of the variational models. 

It is unsurprising that $\qBRNN$ outperforms the other variants. In a manner similar to the Forward-Backward algorithm, 
the Bi-Directional RNN summarizes the past and the future at every timestep to form the most effective approximation to the posterior distribution
of $z_t$. 

It is also not surprising that $\qIndep$ performs poorly, both in the quality of the samples and
in held out log-likelihood. Given the sequential nature of the data, the posterior for $z_t$ is poorly approximated when
only given access to $x_t$. 
The samples capture
the effect of rotation but not the squares. 

Although the test log-likelihood was better for $\qLR$ than for $\qRNN$, the samples obtained were worse off - in particular, they did not capture the three consecutive block structure.  
The key difference between $\qLR$ and $\qRNN$ is the fact that the former has no memory that it carries across time. This facet will likely be more relevant 
in sequences where there are multiple patterns and the recognition network needs to remember the order in which the patters are generated.

Finally, we note that both $\qRNN$ and $\qBRNN$ learn generative models with plausible samples.  

\subsection*{\textbf{Large Healing MNIST}}

Figure \ref{fig:healingMNIST_reconstructions} depicts pairs of training sequences, and the mean probabilities obtained
after reconstruction as learning progresses. There are instances (first and third from the top) where the noise level is too high for the 
structure of the digit to be recognized from the training data alone. 
The reconstructions also shows 
that the model learns different styles of the digits (corresponding to variances within individual patients).

Figure \ref{fig:healingMNIST_samples} depicts samples from the model under varying degrees of rotation (corresponding to the intensity of a treatment for example).
Here again, the model shows that it is capable of learning variations within the digit as well as realizing the effect of an action and its intensity. This is a simple form of counterfactual reasoning that can be performed by the model, since none of the samples on display are within the training set.  

Figure \ref{fig:healingMNIST_counterfactuals} answers two questions. The first is what happens when we ask the model to reconstruct on data that looks similar
to the training distribution but not the same. 
The input image is on the left (with a clean and noisy version of the digit displayed) and the following sample represent the reconstruction by the variational model of a sequence
created from the input images. Following this, we forward sample from the model using the inferred latent representation under a constant action.  

To this end, we consider digits of the same class (i.e. $1,5$) but of a different style than the model was trained on.
This idea has parallels within the medical setting where one asks about the course of action for a new patient. 
On this unseen patient, the model would infer a latent state similar to a patient that exists in the training set.
This facet of the model motivates further investigation into the model's capabilities as a metric for patient similarity. 
To simulate the medical question: What would happen if the doctor prescribed the drug ``rotate right mildly''? We forward sample
from the model under this action.

In most cases, noisy or not, the patient's reconstruction matches a close estimate of a digit found in the training set (close in log-likelihood since
this is the criterion the emission distribution was trained on). The final four rows depict scenarios in which the noise level is too high and
we don't show the model enough sequences to make accurate inferences about the digit.

\subsection{Generative Models of Medical Data}
\begin{figure}[h]
\begin{minipage}{0.4\textwidth}
\begin{subfigure}[b]{\linewidth}
	\includegraphics[width=\linewidth]{./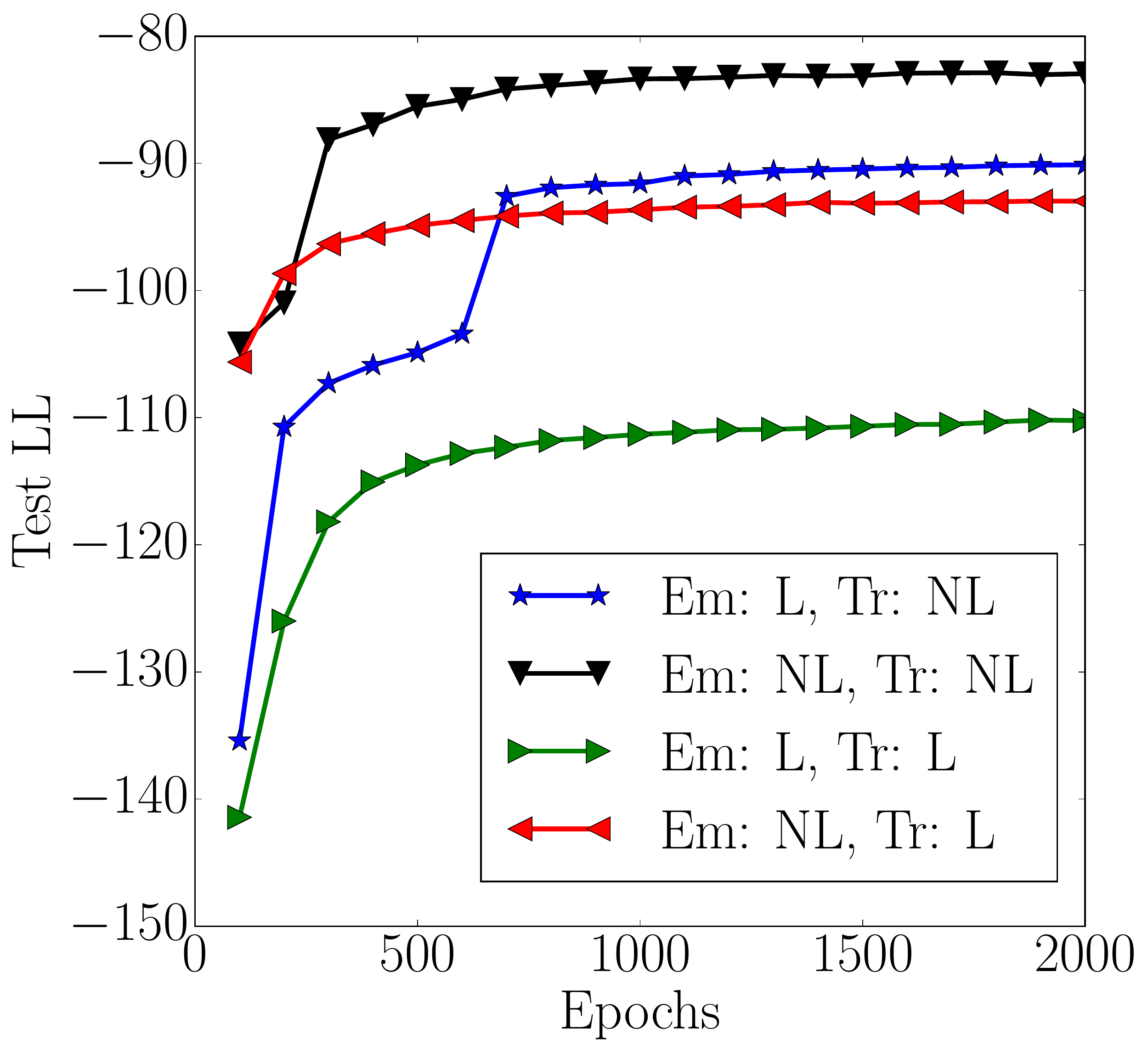}
	\label{fig:medical_ll}
	\caption{\small{Test Log-Likelihood}}
\end{subfigure}
\end{minipage}
\begin{minipage}{0.59\textwidth}
\begin{subfigure}[b]{\linewidth}
	\includegraphics[width=\linewidth]{./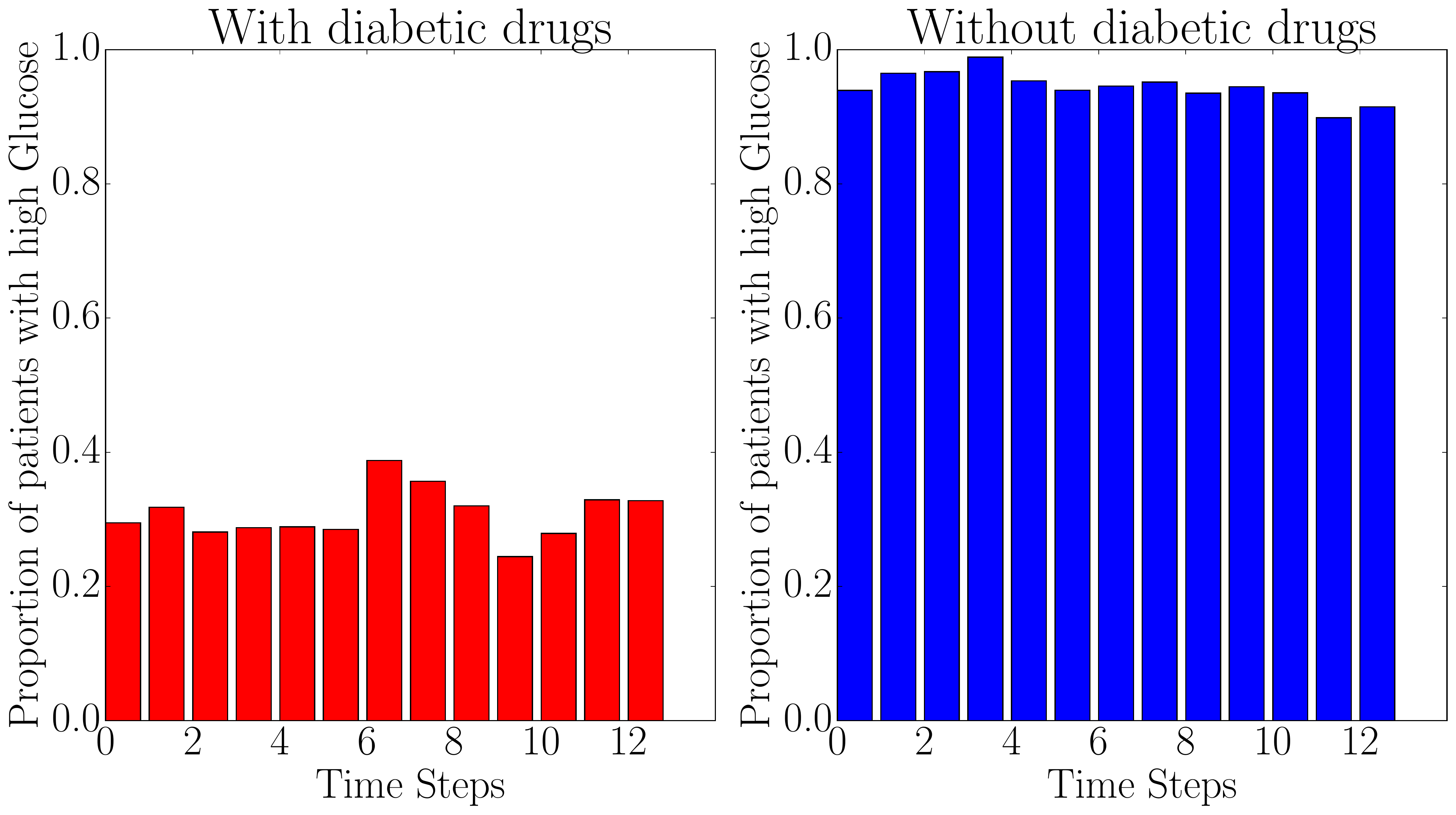}
	\caption{}
	\label{fig:medical_cfac_gluc}
\end{subfigure}\\[1ex]
\begin{subfigure}[b]{\linewidth}
	\includegraphics[width=\linewidth]{./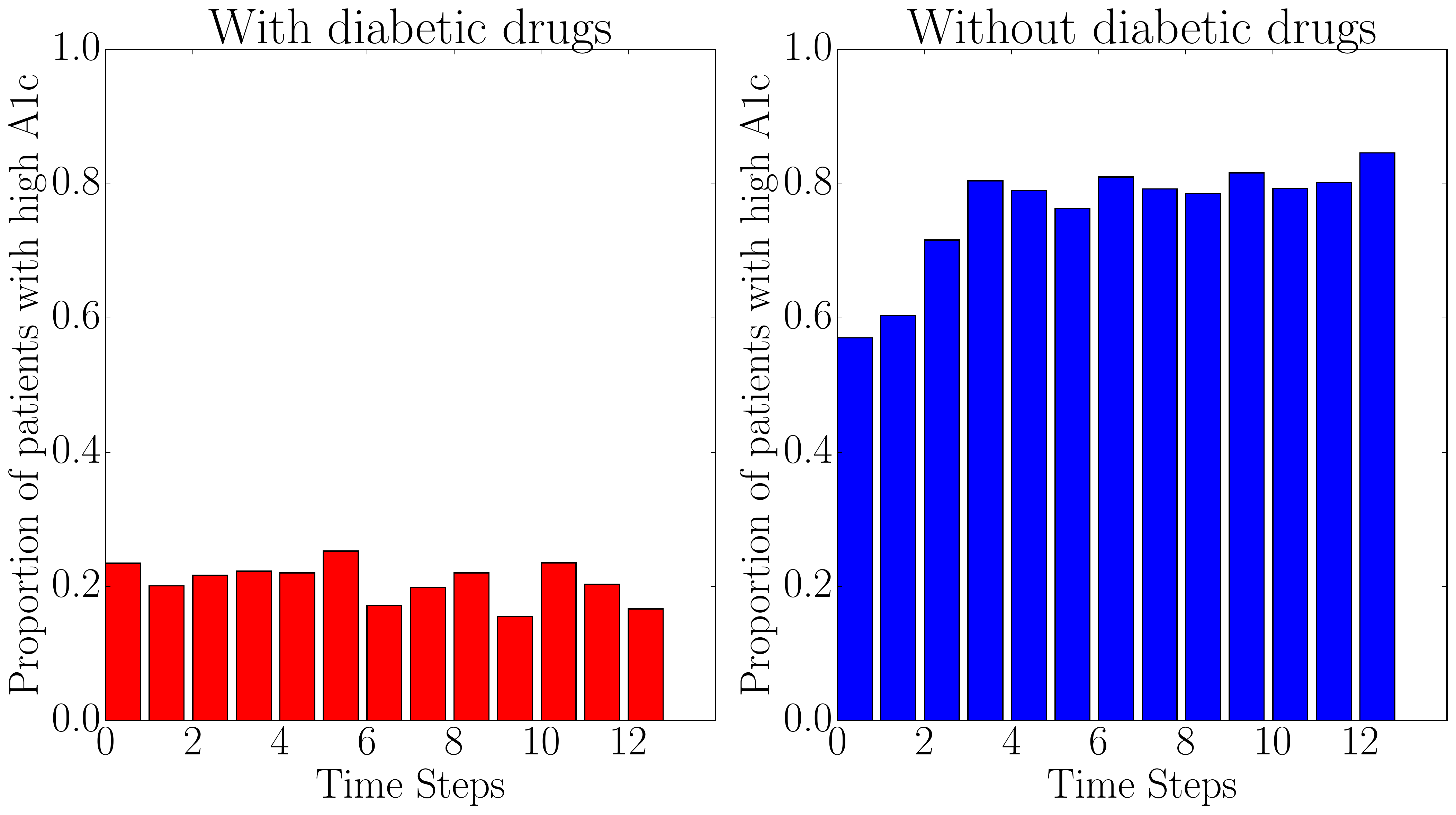}
	\caption{}
	\label{fig:medical_cfac_a1c}
\end{subfigure}
\end{minipage}
	\vskip -3pt\caption{\small{Results of disease progression
            modeling for 8000 diabetic and pre-diabetic patients. (a)
            Test log-likelihood under different model
            variants. Em(ission) denotes $\emisfxn$, Tr(ansition)
            denotes $\meanfxn$ under Linear (L) and Non-Linear (NL)
            functions. We learn a fixed diagonal $\covfxn$. (b)
            Proportion of patients inferred to have high (top
            quantile) glucose level with and without anti-diabetic
            drugs, starting from the time of first Metformin
            prescription. (c) Proportion of patients inferred to have
            high (above 8\%) A1c level with and without anti-diabetic
            drugs, starting from the time of first Metformin
            prescription. Both (b) and (c) are created using the model trained with non-linear emission and transition functions.}}
\label{fig:medical} 
\vspace{-4mm}
\end{figure}
We learn a generative model on healthcare claims data from a major health insurance provider. We look into the effect of anti-diabetic drugs on a population of $8000$ diabetic and pre-diabetic patients.

The (binary) observations of interest here are: A1c level (hemoglobin A1c, a type of protein commonly used in the medical literature to indicate level of diabetes where high A1c level are an indicator for diabetes) and glucose (the amount of a patient's blood sugar).
We bin glucose into quantiles and A1c into medically meaningful bins.
The observations also include age, gender and ICD-9 diagnosis codes depicting various comorbidities of diabetes such as congestive heart failure, chronic kidney disease and obesity.

For actions, we consider prescriptions of nine diabetic drugs including Metformin and Insulin, where Metformin is the most commonly prescribed first-line anti-diabetic drug. For each patient, we group their data over four and half years into
three months intervals.  

We aim to assess the effect of anti-diabetic drugs on a patient's A1c and glucose levels. To that end, we ask a counterfactual question: how would the patient's A1c and glucose levels be had they not received the prescribed medications as observed in the dataset.

A complication in trying to perform counterfactual inference for the A1c and glucose levels is that these quantities are not always measured for each patient at each time step. Moreover, patients who are suspected of being diabetic are tested much more often for their A1c and glucose levels, compared with healthy patients, creating a confounding factor, since diabetic patients tend to have higher A1c and glucose levels. To overcome this we add an observation variable called ``lab indicator'', denoted $x^{\text{ind}}$, which indicates whether the respective lab test, either A1c or glucose, was taken regardless of its outcome. We condition  the time $t$ lab indicator observation, ${x_t}^{\text{ind}}$, on the latent state $z_t$,  and we condition the A1c and glucose value observations on both the latent state and the lab indicator observation. That way, once the model is trained we can perform counterfactual inference by using the \emph{do}-operator on the lab indicator: setting it to $1$ and ignoring its dependence on the latent state. See Figure \ref{fig:labind_both} for an illustration of the model.

We train the model on a dataset of $8000$ patients. We use $\qBRNN$ as the recognition model.

\vspace{1mm}
\textbf{Variants of Kalman Filters: }
Figure \ref{fig:medical}(a) depicts the test log likelihood under variants of the graphical model depicted in Figure \ref{fig:dkf}. \textbf{Em}(ission) denotes $\emisfxn$, the emission function,  and \textbf{Tr}(ansition) denotes $\meanfxn$, the transition function of the mean. 
We learn a fixed diagonal covariance matrix ($\covfxn$). See Eq. \eqref{eqn:gen_model}
for the role these quantities play in the generative model.
Linear (L) denotes a linear relationship between entities, 
and Non-Linear (NL) denotes a non-linear one parameterized by a two-layer neural network. Early in training, a non-linear emission function suffices to achieve a high test log likelihood though as training progresses
the models with non-linear transition functions dominate.

\vspace{1mm}
\textbf{Counterfactual Inference: }
We use a model with non-linear transition and non-linear emission functions.
We look at patients whose first prescribed anti-diabetic drug was Metformin, the most common first-line anti-diabetic drug, and who have at least six months of data before the first Metformin prescription. This leaves us with 800 patients for whom we ask the counterfactual question. For these patients, we infer the latent state up to the time $t_0$ of first Metformin prescription. After $t_0$ we perform forward sampling under two conditions: the ``{\bf{with}}'' condition is using the patient's true medication prescriptions; the ``{\bf{without}}'' condition is removing the medication prescriptions, simulating a patient who receives no anti-diabetic medication. In both cases we set the lab indicator variable ${x_t}^{\text{ind}}$ to be $1$, so we can observe the A1c and glucose lab values. We then compare the inferred A1c and glucose lab values between the ``{\bf{with}}'' and ``{\bf{without}}'' conditions after the time of first Metformin prescription. Figure \ref{fig:medical} presents the results, where we track the proportion of patients with high glucose level (glucose in the top quantile) and high A1c levels (A1c above 8\%), starting from the time of first Metformin prescription. It is evident that patients who do not receive the anti-diabetic drugs are much more prone to having high glucose and A1c levels.
\begin{figure}
    \centering
    \begin{subfigure}[t]{0.45\textwidth}
    \centering
	\begin{tikzpicture}[scale=1, transform shape]
		\node[obs] (xt) {$\mathbf{x_t}$};
		\node[latent, above=of xt] (zt) {$\mathbf{z_t}$};
		\node[latent, left=of zt, xshift=-0.2cm] (z1) {$\mathbf{z_{t-1}}$};
		\node[latent, right=of zt, xshift=0.8cm] (zT) {$\mathbf{z_{t+1}}$};
		\node[obs, below=of zT, yshift=1cm, xshift=-1.3cm] (xind) {$\mathbf{{x_t}^{\text{ind}}}$};		
		\edge {zt} {xt};
		\edge {z1} {zt};
		\edge {zt}{zT};
		\edge{zt}{xind};
		\edge{xind}{xt};
		\plate [xscale=1] {} {(xt)(zt)(xind)} {} ;
	\end{tikzpicture}
	\label{fig:labind}
	\caption{\small{Graphical model during training}}
   \end{subfigure}
    \begin{subfigure}[t]{0.4\textwidth}
    \centering
	\begin{tikzpicture}[scale=1, transform shape]
		\node[obs] (xt) {$\mathbf{x_t}$};
		\node[latent, above=of xt] (zt) {$\mathbf{z_t}$};
		\node[latent, left=of zt, xshift=-0.2cm] (z1) {$\mathbf{z_{t-1}}$};
		\node[latent, right=of zt, xshift=0.8cm] (zT) {$\mathbf{z_{t+1}}$};
		\tikzstyle{obs_do}=[circle,fill=gray!85,draw=black,minimum size=17pt,inner sep=2pt]
		\node[obs_do, below=of zT, yshift=1cm, xshift=-1.3cm] (xind) {$\mathbf{1}$};	
		\edge {zt} {xt};
		\edge {z1} {zt};
		\edge {zt}{zT};
		\edge{xind}{xt};
		\plate [xscale=1] {} {(xt)(zt)(xind)} {} ;
	\end{tikzpicture}
	\label{fig:labind_do}
	\caption{\small{Graphical model during counterfactual inference}}
   \end{subfigure}
   \caption{\small{(a) Generative model with lab indicator variable, focusing on time step $t$. (b) For counterfactual inference we set ${x_t}^{\text{ind}}$ to 1, implementing Pearl's \emph{do}-operator}}
   \label{fig:labind_both}
\end{figure}
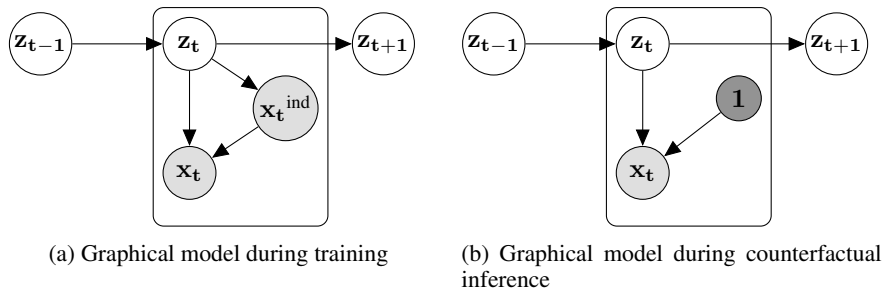

\section{Discussion}

We show promising results that nonlinear-state space models can be effective
models for counterfactual analysis. 
The parametric posterior
can be used to approximate the latent state of unseen data. 
We can 
forward sample from the model under different actions and observe their consequent effect.  
Beyond counterfactual inference, the model represents a natural way 
to embed patients into latent space making it possible to ask questions about patient similarity.
Another avenue of work is understanding whether the latent variable space encodes identifiable
characteristics of a patient and whether the evolution of the latent space corresponds to known disease trajectories. 

There exists interesting avenues of future work for our model in a multitude of areas. 
A natural question that arises, particularly with models trained on the Healing MNIST
is the quality of temporal and spatial invariance in the learned filters. Unsupervised learning
of videos is another domain where our model holds promise. Approaches such as \citep{srivastava2015unsupervised}
model video sequences using LSTMs with deterministic transition operators. The effect of stochasticity
in the latent space is an interesting one to explore.

\section*{Acknowledgements}
The Tesla K40s used for this research were donated by the NVIDIA 
Corporation. The authors gratefully acknowledge support by the DARPA Probabilistic Programming for Advancing
Machine Learning (PPAML) Program under AFRL prime contract
no. FA8750-14-C-0005, ONR \#N00014-13-1-0646, and NSF CAREER award
\#1350965. 

\bibliographystyle{authordate1}
\small{
\bibliography{refs}
}

\clearpage
\appendix
\section{Lower Bound on Likelihood\label{appsec:lb_likelihood}}

In the following we omit the dependence of $q$ on $\vecx$ and $\vecu$, and omit the subscript $\phi$.
We can show that the $\KL$ divergence between the approximation to the posterior and the prior simplifies as:
\begin{dmath}
	\KL(q(z_1,\ldots,z_T)||p(z_1,\ldots,z_T))
	= \int_{z_1}\ldots\int_{z_T} q(z_1)\ldots q(z_T|z_{T-1}) \log \frac{p(z_1,z_2,\ldots,z_T)}{q(z_1)\ldots q(z_T|z_{T-1})}\\
	\mathtext{Factorization of the variational distribution}\\ 
	= \int_{z_1}\ldots\int_{z_T} q(z_1)\ldots q(z_T|z_{T-1})
	\log \frac{p(z_1) p(z_2|z_1,u_1)\ldots p(z_T|z_{T-1},u_{T-1})}{q(z_1)\ldots q(z_T|z_{T-1})}\\ 
	\mathtext{Factorization of the prior}\\
	= \int_{z_1}\ldots\int_{z_T} q(z_1)\ldots q(z_T|z_{T-1}) \log \frac{p(z_1)}{q(z_1)}
	+ \sum_{t=2}^{T} \int_{z_1}\ldots\int_{z_T} q(z_1)\ldots q(z_T|z_{T-1}) \log\frac{p(z_t|z_{t-1})}{q(z_t|z_{t-1})}\\
	= \int_{z_1} q(z_1)\log\frac{p(z_1)}{q(z_1)} +  \sum_{t=2}^{T} \int_{z_{t-1}}\int_{z_{t}}q(z_t)\log\frac{p(z_t|z_{t-1})}{q(z_t|z_{t-1})}\\
	\mathtext{Each expectation over $z_t$ is constant for $t\notin\{t,t-1\}$}\\
	= \KL(q(z_1)||p(z_1)) 
	+ \sum_{t=2}^{T-1} \Exp{\KL(q(z_t|z_{t-1})||p(z_t|z_{t-1},u_{t-1}))}{q(z_{t-1})}\\
\end{dmath}

For evaluating the marginal likelihood on the test set, we can use the following Monte-Carlo estimate:
\begin{dmath}
	p(\vecx)\approxeq \frac{1}{S} \sum_{s=1}^S \frac{p(\vecx|\vecz^{(s)}) p(\vecz^{(s)}) }{q(\vecz^{(s)}|\vecx)} \;\;\; \vecz^{(s)}\sim q(\vecz|\vecx)
\end{dmath}
This may be derived in a manner akin to the one depicted in Appendix E \citep{rezende2014stochastic} or Appendix D \citep{kingma2013auto}.

The log likelihood on the test set is computed using: 
\begin{dmath}
	\label{eqn:test_ll_logsum}
	\log p(\vecx)\approxeq \log \frac{1}{S} \sum_{s=1}^S \exp \log\left[\frac{p(\vecx|\vecz^{(s)}) p(\vecz^{(s)}) }{q(\vecz^{(s)}|\vecx)}\right]
\end{dmath}
\eqref{eqn:test_ll_logsum} may be computed in a numerically stable manner using the log-sum-exp trick.  

\section{KL divergence between Prior and Posterior\label{appsec:kldiv}}

Maximum likelihood learning requires us to compute:
\begin{dmath}
	\label{eqn:KLdiv_factorized_app}
	KL(q(z_1,\ldots,z_T)||p(z_1,\ldots,z_T))
	= KL(q(z_1)||p(z_1)) + \sum_{t=2}^{T-1} \Exp{KL(q(z_t)||p(z_t|z_{t-1},u_{t-1}))}{q(z_{t-1})}\\
\end{dmath}

The KL divergence between two multivariate Gaussians $q$, $p$ with respective means and covariances $\mu_q, \Sigma_q, \mu_p, \Sigma_p$ can be written as:
\begin{dmath}
	\label{eqn:KLdiv_multivar}
	KL(q||p) = \frac{1}{2}\left( \underbrace{\log\frac{|\Sigma_p|}{|\Sigma_q|}}_{(a)} -D + 
	\underbrace{\Tr(\Prec{p}\Sigma_q)}_{(b)} + \underbrace{(\mu_p-\mu_q)^T\Prec{p}(\mu_p-\mu_q)}_{(c)}\right)
\end{dmath}
The choice of $q$ and $p$ is suggestive. using \eqref{eqn:KLdiv_factorized_app} \& \eqref{eqn:KLdiv_multivar}, 
we can derive a closed form for the KL divergence between $q(z_1\ldots z_T)$ and $p(z_1\ldots z_T)$.

$\mu_q,\Sigma_q$ are the outputs of the variational model. Our functional form for $\mu_p,\Sigma_p$ is based on our generative and can
be summarized as: 
\begin{align*}
	\mu_{p1} = 0\qquad
	\Sigma_{p1} = \Id\qquad
	\mu_{pt} = G(z_{t-1},u_{t-1}) = G_{t-1}\qquad
	\Sigma_{pt} = \dt\vsigma\qquad
\end{align*}

Here, $\Sigma_{pt}$ is assumed to be a learned diagonal matrix and $\dt$ a scalar parameter.  

\textbf{Term (a)}
For $t=1$, we have:
\begin{dmath}
	\label{eqn:logdet_1}
	\log\frac{|\Sigma_{p1}|}{|\Sigma_{q1}|} = \log|\Sigma_{p1}|-\log|\Sigma_{q1}|
	= -\log|\Sigma_{q1}|
\end{dmath}

For $t>1$, we have:
\begin{dmath}
	\label{eqn:logdet_t}
	\log\frac{|\Sigma_{pt}|}{|\Sigma_{qt}|} = \log|\Sigma_{pt}|-\log|\Sigma_{qt}|
	= D \log(\dt) + \log|\vsigma| -\log|\Sigma_{qt}|
\end{dmath}

\textbf{Term (b)}
For $t=1$, we have:
\begin{dmath}
	\label{eqn:trace_1}
	\Tr(\Prec{p1}\Sigma_{q1}) = \Tr(\Sigma_{q1})
\end{dmath}

For $t>1$, we have: 
\begin{dmath}
	\label{eqn:trace_t}
	\Tr(\Prec{pt}\Sigma_{qt}) = \frac{1}{\dt}\Tr(\diag(\vsigma)^{-1}\Sigma_{qt})
\end{dmath}

\textbf{Term (c)}
For $t=1$, we have:
\begin{equation}
	\label{eqn:quad_form_1}
	(\mu_{p1}-\mu_{q1})^T\Sigma_{p1}^{-1}(\mu_{p1}-\mu_{q1}) = ||\mu_{q1}||^2\\
\end{equation}

For $t>1$, we have:
\begin{equation}
	\label{eqn:quad_form_t}
	(\mu_{pt}-\mu_{qt})^T\Sigma_{pt}^{-1}(\mu_{pt}-\mu_{qt}) = \\
	\dt (G_{t-1}-\mu_{qt})^T\diag(\vsigma)^{-1}(G_{t-1}-\mu_{qt})
\end{equation}

Rewriting \eqref{eqn:KLdiv_factorized_app} using \eqref{eqn:logdet_1}, \eqref{eqn:logdet_t}, \eqref{eqn:trace_1}, \eqref{eqn:trace_t}, \eqref{eqn:quad_form_1}, \eqref{eqn:quad_form_t}, we get:

\begin{dmath}
	\label{eqn:KLdiv_final}
	KL(q(z_1,\ldots,z_T)||p(z_1,\ldots,z_T)) = 
	\frac{1}{2}\left((T-1)D \log(\dt)\log|\vsigma| -\sum_{t=1}^T\log|\Sigma_{qt}|  \\
	+ \Tr(\Sigma_{q1})+\frac{1}{\dt}\sum_{t=2}^T \Tr(\diag(\vsigma)^{-1}\Sigma_{qt})
	+ ||\mu_{q1}||^2 \\
	+ \dt \sum_{t=2}^T \Exp{(G_{t-1}-\mu_{qt})^T\diag(\vsigma)^{-1}(G_{t-1}-\mu_{qt})}{z_{t-1}}\right)\\
\end{dmath}

We can now take gradients with respect to $\mu_{qt}, \Sigma_{qt}, G(z_{t-1},u_{t-1})$ in \eqref{eqn:KLdiv_final}.

\section{Additional Experimental Results}
We consider a variant of {\bf{Large Healing MNIST}} trained on $100$ different kinds of $0,2$s each.
\begin{figure}[h]
\begin{subfigure}[b]{0.45\textwidth}
	\centering
	\includegraphics[width=0.65\textwidth]{./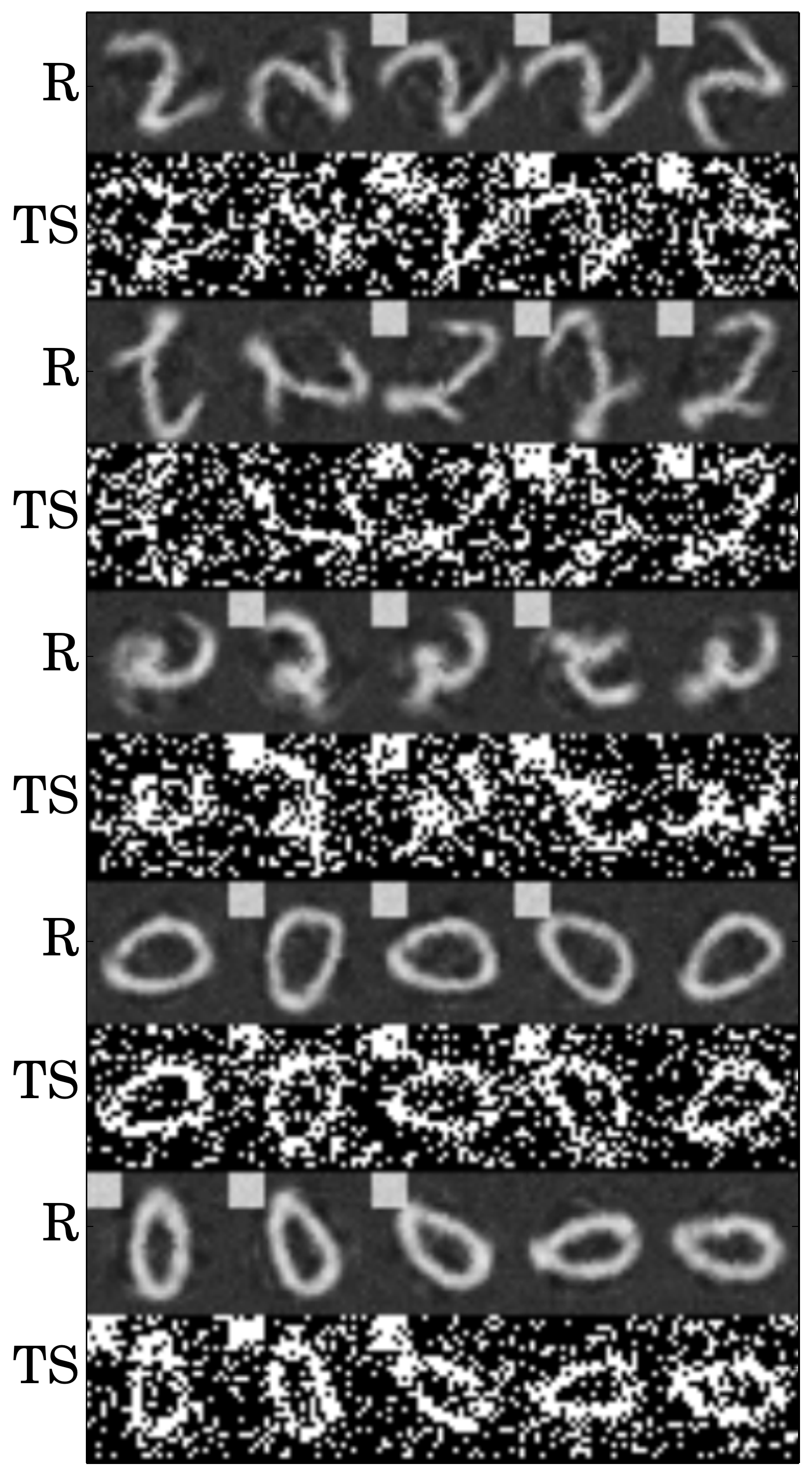}
	\caption{\small{{\bf{Large Healing MNIST}} (0,2):Pairs of Training Sequences (TS) and mean probabilities of Reconstructions (R) shown above.}}
	\label{fig:healingMNIST_reconstructions02}
\end{subfigure}
\begin{subfigure}[b]{0.45\textwidth}
	\centering
	\includegraphics[width=0.55\textwidth]{./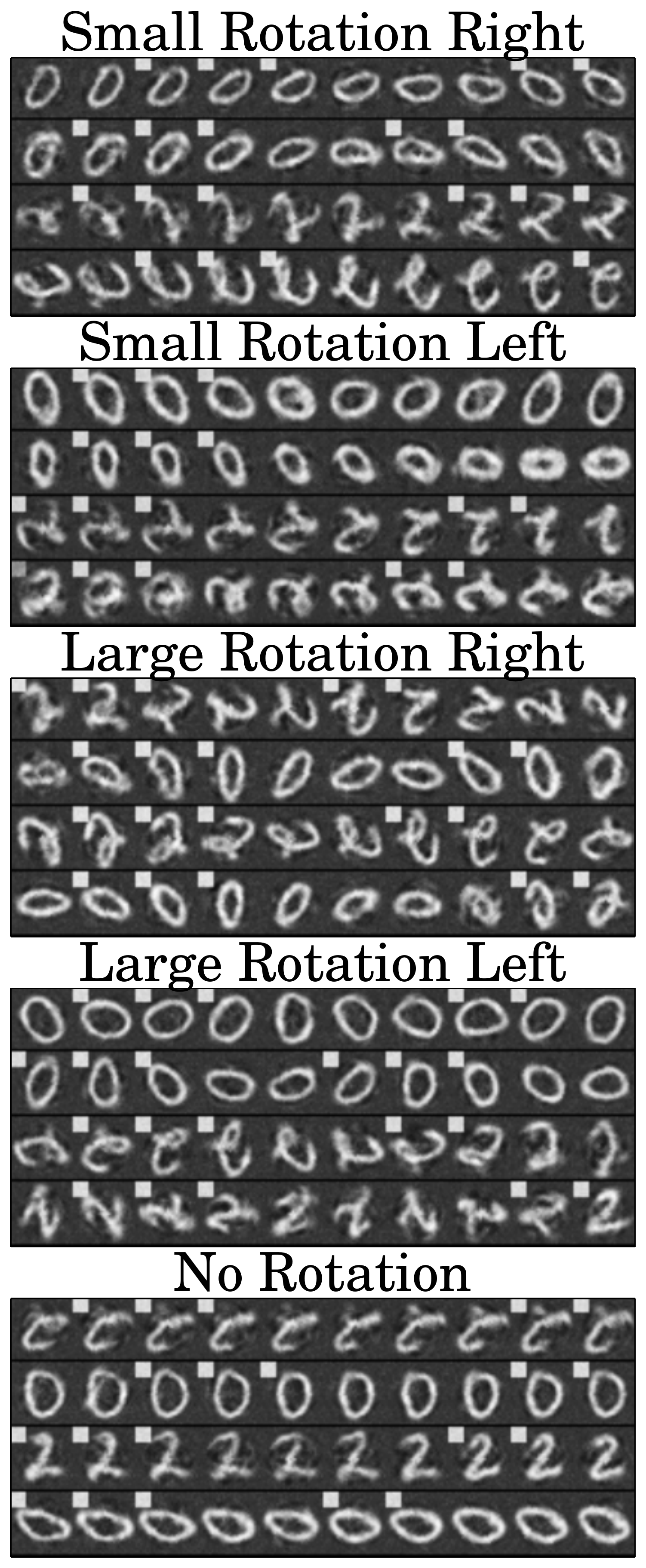}
	\caption{{\bf{Large Healing MNIST}} (0,2): Mean probabilities sampled under different, constant rotations.}
	\label{fig:healingMNIST_samples02}
\end{subfigure}
\end{figure}

Figure \ref{fig:healingMNIST_reconstructions02} and \ref{fig:healingMNIST_samples02} depict the reconstructions and samples from a model trained on the digits $0$ and $2$.

\end{document}